\newcommand{\bcomment}[1]{}
\newcommand{\TODO}[1]{}
\newcommand{\E}{\mathbb{E}}
\newcommand{\R}{\mathbb{R}}
\newcommand{\N}{\mathbb{N}}
\newcommand{\Z}{\mathbb{Z}}
\newcommand{\bfa}{\mathbf{a}}
\newcommand{\bfc}{\mathbf{c}}
\newcommand{\bfx}{\mathbf{x}}
\newcommand{\bfy}{\mathbf{y}}
\newcommand{\mcB}{\mathcal{B}}
\newcommand{\mcC}{\mathcal{C}}
\newcommand{\mcF}{\mathcal{F}}
\newcommand{\mcH}{\mathcal{H}}
\newcommand{\mcG}{\mathcal{G}}
\newcommand{\mcT}{\mathcal{T}}
\newcommand{\mcX}{\mathcal{X}}
\newcommand{\one}{\mathbf{1}}
\newcommand{\zero}{\mathbf{0}}
\newcommand{\multisetds}[2]{\bigg(\kern-.4em\binom{#1}{#2}\kern-.4em\bigg)}
\newcommand{\multisetin}[2]{\big(\kern-.3em\binom{#1}{#2}\kern-.3em\big)}
\newcommand{\multisetix}[2]{\left(\kern-.2em\binom{#1}{#2}\kern-.2em\right)}
\newcommand{\VCtext}{VC-dimension\xspace}
\newcommand{\AVCtext}{adversarial VC-dimension\xspace}
\newcommand{\shatter}{shattering coefficient\xspace}
\DeclareMathOperator*{\argmin}{argmin}
\newtheorem{theorem}{Theorem}
\newtheorem{lemma}{Lemma}
\newtheorem{definition}{Definition}
\newcommand{\classes}{\mcC}
\newcommand{\tC}{\widetilde{\mcC}}
\newcommand{\tF}{\widetilde{\mcF}}
\newcommand{\tf}{\tilde{f}}
\newcommand{\tH}{\widetilde{\mcH}}
\newcommand{\VC}{\operatorname{VC}}
\newcommand{\AVC}{\operatorname{AVC}}
\newcommand{\AERM}{\operatorname{AERM}}
\newcommand{\lspan}{\operatorname{span}}
\newcommand{\sgn}{\operatorname{sgn}}
\title{PAC-learning in the presence of evasion adversaries}
\author{Daniel Cullina\\
	Princeton University\\
	\texttt{dcullina@princeton.edu}
	\And
Arjun Nitin Bhagoji\\
	Princeton University\\
\texttt{abhagoji@princeton.edu}
\And
Prateek Mittal\\
	Princeton University\\
	\texttt{pmittal@princeton.edu}}
\begin{document}

\maketitle

\begin{abstract}
The existence of evasion attacks during the test phase of machine learning algorithms represents a significant challenge to both their deployment and understanding. These attacks can be carried out by adding imperceptible perturbations to inputs to generate \emph{adversarial examples} and finding effective defenses and detectors has proven to be difficult. In this paper, we step away from the attack-defense arms race and seek to understand the limits of what can be learned in the presence of an evasion adversary. In particular, we extend the Probably Approximately Correct (PAC)-learning framework to account for the presence of an adversary. We first define corrupted hypothesis classes which arise from standard binary hypothesis classes in the presence of an evasion adversary and derive the Vapnik-Chervonenkis (VC)-dimension for these, denoted as the \AVCtext. We then show that sample complexity upper bounds from the Fundamental Theorem of Statistical learning can be extended to the case of evasion adversaries, where the sample complexity is controlled by the \AVCtext. We then explicitly derive the \AVCtext for halfspace classifiers in the presence of a sample-wise norm-constrained adversary of the type commonly studied for evasion attacks and show that it is the same as the standard \VCtext, closing an open question. Finally, we prove that the \AVCtext can be either larger or smaller than the standard \VCtext depending on the hypothesis class and adversary, making it an interesting object of study in its own right.
\end{abstract}

\section{Introduction} \label{sec:intro}
Machine learning (ML) has become ubiquitous due to its impressive performance in domains as varied as image recognition \cite{Krizhevsky:2012:ICD:2999134.2999257,simonyan2014very}, natural language and speech processing \cite{collobert2011natural,hinton2012deep,deng2013new}, game-playing \cite{silver2017mastering,brown2017superhuman,moravvcik2017deepstack} and aircraft collision avoidance \cite{julian2016policy}. However, its ubiquity provides adversaries with both opportunities and incentives to develop strategic approaches to fool machine learning systems during both training (poisoning attacks) \cite{biggio2012poisoning,rubinstein2009stealthy,mozaffari2015systematic,jagielski2018manipulating} and test (evasion attacks) \cite{biggio2013evasion, szegedy2013intriguing,goodfellow2014explaining,papernot2016limitations,moosavi2015deepfool,moosavi2016universal,carlini2017towards} phases. Our focus in this paper is on evasion attacks targeting the test phase, particularly those based on adversarial examples which add imperceptible perturbations to the input in order to cause misclassification. A large number of adversarial example-based evasion attacks have been proposed against supervised ML algorithms used for image classification \cite{biggio2013evasion,szegedy2013intriguing,goodfellow2014explaining,carlini2017towards,papernot2016limitations,chen2017ead}, object detection \cite{xie2017adversarial,lu2017adversarial,chen2018robust}, image segmentation \cite{fischer2017adversarial,arnab_cvpr_2018}, speech recognition \cite{carlini2018audio,yuan2018commandersong} as well as other tasks \cite{cisse2017houdini,kantchelian2016evasion,grosse2017adversarial,xu2016automatically}; generative models for image data \cite{kos2017adversarial} and even reinforcement learning algorithms \cite{kos2017delving,huang2017adversarial}. These attacks have been carried out in black-box \cite{szegedy2013intriguing,papernot2016transferability,papernot2016practical,liu2016delving,brendel2017decision,bhagoji_iclr_2018,chen2017zoo} as well as in physical settings \cite{sharif2016accessorize,kurakin2016adversarial,Evtimov17,sitawarin2018rogue}. 

To counter these attacks, defenses based on the ideas of adversarial training \cite{goodfellow2014explaining,tramer2017ensemble,madry_towards_2017}, input de-noising through transformations \cite{bhagoji2017dimensionality,dziugaite2016study,xu2017feature,das2018shield,shaham2018defending}, distillation \cite{papernot2016distillation}, ensembling \cite{abbasi2017robustness,bagnall2017training,smutz2016tree} and feature nullification \cite{wang2017adversary} have been proposed. A number of \emph{detectors} \cite{meng2017magnet,feinman2017detecting,gong2017adversarial,fischer2017adversarial,grosse2017statistical} for adversarial examples have also been proposed.  However, recent work \cite{carlini2016defensive,carlini2017adversarial,carlini2017magnet} has demonstrated that modifications to existing attacks are sufficient to generate adversarial examples that bypass both defenses and detectors. In light of this burgeoning arms race, defenses that come equipped with theoretical guarantees on robustness have recently been proposed \cite{raghunathan2018certified,sinha2017certifiable,kolter2017provable}. These have been demonstrated for neural networks with up to four layers.

In this paper, we take a more fundamental approach to understanding the robustness of supervised classification algorithms by extending well-understood results for supervised batch learning in statistical learning theory. In particular, we seek to understand the sample complexity of Probably Approximately Correct (PAC)-learning in the presence of adversaries. This was raised as an open question for halfspace classifiers by Schmidt et al. \cite{schmidt2018adversarially} in concurrent work which focused on the sample complexity needed to learn \emph{specific distributions}. We close this open question by showing that the sample complexity of PAC-learning when the hypothesis class is the set of halfspace classifers \emph{does not increase} in the presence of adversaries bounded by convex constraint sets. We note that the PAC-learning framework is distribution-agnostic, i.e. it is a statement about learning given independent, identically distributed samples from \emph{any} distribution over the input space. We show this by first introducing the notion of \emph{corrupted hypothesis classes}, which arise from standard hypothesis classes in the binary setting in the presence of an adversary. Now, in the standard PAC learning setting, i.e. without the presence of adversaries, the Vapnik-Chervonenkis (VC)-dimension is a way to characterize the `size' of a hypothesis class which allows for the determination of which hypothesis classes are learnable and with how much data (i.e. sample complexity). In the adversarial setting, we introduce the notion of \emph{adversarial \VCtext} which is the \VCtext of the corrupted hypothesis class. With these definitions in place, we can then prove sample complexity upper bounds from the Fundamental Theorem of Statistical Learning in the presence of adversaries that utilize the \AVCtext. 

In this setting, we explicitly compute the \AVCtext for the hypothesis class comprising all halfspace classifiers, which then directly gives us the sample complexity of PAC-learning in the presence of adversaries. This hypothesis class has a \VCtext of 1 more than the dimension of the input space when no adversaries are present. We prove that this does not increase in the presence of an adversary, i.e., \emph{the \AVCtext is equal to the \VCtext for the hypothesis class comprising all halfspace classifiers}. Our result then raises the question: is the \AVCtext always equal to the standard \VCtext? We answer this question in the negative, by showing explicit constructions for hypothesis classes and adversarial constraints for which the \AVCtext can be arbitrarily larger or smaller than the standard \VCtext.

\noindent \textbf{Contributions:} In this paper, we are the first to provide sample complexity bounds for the problem of PAC-learning in the presence of an adversary. We show that an analog of the \VCtext which we term the \AVCtext suffices to understand both learnability and sample complexity for the case of binary hypothesis classes with the $0$-$1$ loss in the presence of evasion adversaries. We explicitly compute the \AVCtext for halfspace classifiers with adversaries with standard $\ell_p$ ($p \geq 1$) distance constraints on adversarial perturbations, and show that it matches the standard \VCtext. This implies that the sample complexity of PAC-learning does not increase in the presence of an adversary. We also show that this is not always the case by constructing hypothesis classes where the \AVCtext is arbitrarily larger or smaller than the standard one.
\section{Adversarial agnostic PAC-learning} \label{sec:setup}
In this section, we set up the problem of agnostic PAC-learning in the presence of an evasion adversary which presents the learner with adversarial test examples but does not interfere with the training process. We also define the notation for the rest of the paper and briefly explain the connections between our setting and other work on adversarial examples.
\begin{table}
  \centering
  \begin{tabular}{c|c}
    Symbol & Usage \\ \midrule
    $\mcX$ & Space of examples\\
    $\mcC = \{-1,1\}$ & Set of classes\\
    $\mcH \subseteq (\mcX \to \mcC)$ & Set of hypotheses (labelings of examples)\\
    $\ell(c,\hat{c}) = \one(c \neq \hat{c})$ & $0$-$1$ loss function  \\
    $R \subseteq \mcX \times \mcX$ & Binary nearness relation\\
    $N(x) = \{y \in \mcX : (x,y) \in R\}$ & Neighborhood of nearby adversarial examples\\
  \end{tabular}
  \caption{Basic notation used}
  \label{tab:notation}
  \vspace{-10pt}
\end{table}

We summarize the basic notation in Table \ref{tab:notation}.
We extend the agnostic PAC-learning setting introduced by Haussler \cite{haussler_decision_1992} to include an evasion adversary. In our extension, the learning problem is as follows.
There is an unknown $P \in \mathbb{P}(\mcX \times \mcC)$.\footnote{Formally, we have a sigma algebra $\Sigma \subseteq 2^{\mcX \times \mcC}$ of events and $\mathbb{P}(\mcX \times \mcC)$ is the set of probability measures on $(\mcX \times \mcC,\Sigma)$. All hypotheses must be measurable functions relative to $\Sigma$.}
The learner receives labeled training data $(\bfx, \bfc) = ((x_0,c_0),\ldots,(x_{n-1},c_{n-1})) \sim P^n$ and must select $\hat{h} \in \mcH$.
The adversary receives a labeled natural example $(x_{\text{Test}},c_{\text{Test}}) \sim P$ and selects $y \in N(x_{\text{Test}})$, the set of adversarial examples in the neighborhood of $x_{\text{Test}}$.
The adversary gives $y$ to the learner and the learner must estimate $c_{\text{Test}}$.

$R$ is the binary nearness relation that generates the neighborhood $N(x)$ of possible adversarial samples. We require $N(x)$ to be nonempty so some choice of $y$ is always available.\footnote{Additionally, for all $y \in \mcX$, $\{x \in \mcX : (x,y) \in R\}$ should be measurable.}
When $R$ is the identity relation, $I_{\mcX} = \{(x,x) : x \in \mcX\}$, the neighborhood is $N(x) = \{x\}$ and $y = x_{\text{Test}}$, giving us the standard problem of learning without an adversary.
If $R_1,R_2$ are nearness relations and $R_1 \subseteq R_2$, $R_2$ represents a stronger adversary.
One way to produce a relation $R$ is from a distance $d$ on $\mcX$ and an adversarial budget constraint $\epsilon$: $R = \{(x,y) : d(x,y) \leq \epsilon\}$. This provides an ordered family of adversaries of varying strengths and has been used extensively in previous work \cite{Carlini16,goodfellow2014explaining,schmidt2018adversarially}.

Now, we define the Adversarial Expected and Empirical Risks to measure the learner's performance in the presence of an adversary.

\begin{definition}[Adversarial Expected Risk] The learner's risk under the true distribution in the presence of an adversary constrained by the relation $R$ is
  \[
    L_P(h,R) = \E_{(x,c) \sim P}[ \max_{y \in N(x)} \ell(h(y),c)].
  \]
\end{definition}
Let $h^* = \argmin_{h \in \mcH} L_P(h,R)$.
Then, learning is possible if there is an algorithm that, with high probability, gives us $\hat{h}_n$ such that $L_P(\hat{h}_n) - L_P(h^*) \to 0$.

Since the learner does not have access to the true distribution $P$, it is approximated with the distribution of the empirical random variable, which is equal to $(x_i,c_i)$ with probability $1/n$ for each $i \in \{0,\ldots,n-1\}$.
\begin{definition}[Adversarial Empirical Risk Minimization (ERM)]
  The adversarial empirical risk minimizer $\AERM_{\mcH,R} : (\mcX \times \mcC)^n \to (\mcX \to \mcC)$ is defined as
  \begin{align*}
    \AERM_{\mcH,R}(\bfx,\bfc) &= \argmin_{h \in \mcH} L_{(\bfx,\bfc)}(h,R),
  \end{align*}
  where $L_{(\bfx,\bfc)}$ is the expected loss under the empirical distribution.
\end{definition}

Clearly, a stronger adversary leads to worse performance for the best possible classifier and in turn, worse performance for the learner.
\begin{lemma} \label{adv_loss}
  Let $\operatorname{A} : (\mcX \times \mcC)^n \to (\mcX \to \mcC)$ be learning algorithm for a hypothesis class $\mcH$. 
  Suppose $R_1,R_2$ are nearness relations and $R_1 \subseteq R_2$.
  For all $P$,
  \[
    \inf_{h \in \mcH} L_P(h,R_1) \leq \inf_{h \in \mcH} L_P(h,R_2).
  \]
  For all $P$ and all $(\bfx,\bfc)$,
  \[
    L_P(\operatorname{A}(\bfx,\bfc),R_1) \leq L_P(\operatorname{A}(\bfx,\bfc),R_2).
  \]
\end{lemma}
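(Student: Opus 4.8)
The plan is to reduce everything to a single pointwise observation: a larger nearness relation gives a larger adversarial neighborhood, and a max over a larger set can only be larger. Concretely, I would first note that $R_1 \subseteq R_2$ implies $N_1(x) = \{y : (x,y) \in R_1\} \subseteq \{y : (x,y) \in R_2\} = N_2(x)$ for every $x \in \mcX$. Hence for any fixed hypothesis $h \in \mcH$, any $(x,c) \in \mcX \times \mcC$, and the $0$-$1$ loss $\ell$,
\[
  \max_{y \in N_1(x)} \ell(h(y),c) \;\le\; \max_{y \in N_2(x)} \ell(h(y),c),
\]
since the right-hand maximum is taken over a superset (and both neighborhoods are nonempty, so both maxima are well defined).

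Next I would integrate this inequality against $P$. Because the inequality holds for $P$-almost every $(x,c)$ and expectation is monotone, we get, for every fixed $h \in \mcH$,
\[
  L_P(h,R_1) = \E_{(x,c)\sim P}\!\big[\max_{y \in N_1(x)} \ell(h(y),c)\big]
  \;\le\; \E_{(x,c)\sim P}\!\big[\max_{y \in N_2(x)} \ell(h(y),c)\big] = L_P(h,R_2).
\]
The second claim of the lemma is then immediate: apply this with $h = \operatorname{A}(\bfx,\bfc)$, which is a fixed element of $\mcH$ once $(\bfx,\bfc)$ is fixed, to obtain $L_P(\operatorname{A}(\bfx,\bfc),R_1) \le L_P(\operatorname{A}(\bfx,\bfc),R_2)$.

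For the first claim I would take the infimum over $h \in \mcH$. From $L_P(h,R_1) \le L_P(h,R_2)$ for every $h$, and from $\inf_{h'\in\mcH} L_P(h',R_1) \le L_P(h,R_1)$ for every $h$, we get $\inf_{h'\in\mcH} L_P(h',R_1) \le L_P(h,R_2)$ for every $h \in \mcH$; taking the infimum over $h$ on the right gives $\inf_{h\in\mcH} L_P(h,R_1) \le \inf_{h\in\mcH} L_P(h,R_2)$. There is essentially no hard step here; the only points needing a word of care are that $N(x)$ is nonempty (guaranteed by assumption, so the maxima exist) and that the integrand $x \mapsto \max_{y\in N(x)} \ell(h(y),c)$ is measurable (which is part of the standing setup, since hypotheses and the relevant sets are assumed measurable). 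If one prefers to avoid attainment issues entirely, the same argument goes through verbatim with $\sup$ in place of $\max$.
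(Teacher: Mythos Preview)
Your argument is correct and is essentially the same as the paper's: both reduce to the pointwise fact that $N_1(x)\subseteq N_2(x)$ forces $\max_{y\in N_1(x)}\ell(h(y),c)\le \max_{y\in N_2(x)}\ell(h(y),c)$ (equivalently, $\{(x,c):\exists y\in N_1(x)\,.\,h(y)\neq c\}\subseteq\{(x,c):\exists y\in N_2(x)\,.\,h(y)\neq c\}$), and then take expectations and infima. If anything, you are slightly more explicit than the paper in spelling out the infimum step and the specialization to $h=\operatorname{A}(\bfx,\bfc)$.
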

\begin{proof}
  For all $h \in \mcH$,
  \[
    \{ (x,c) : \exists y \in N_1(x) \,.\, h(y) \neq c\} \subseteq \{ (x,c) : \exists y \in N_2(x) \,.\, h(y) \neq c\}
  \]
  so $L_P(h,R_1) \leq L_P(h,R_2)$.
\end{proof}
In other words, if we design a learning algorithm for an adversary constrained by $R_2$, its performance against a weaker adversary is better.

While it is clear that the presence of an adversary leads to a decrease in the optimal performance for the learner, we are now interested in the effect of an adversary on \emph{sample complexity}.
If we add an adversary to the learning setting defined in Definition \ref{def:learnability}, what happens to the gap in performance between the optimal classifier and the learned classifier?

\begin{definition}[Learnability] \label{def:learnability}
  A hypothesis class $\mcH$ is learnable by empirical risk minimization in the presence of an evasion adversary constrained by $R$ if there is a function $m_{\mcH,R} : (0,1)^2 \to \N$ (the sample complexity) with the following property.
  For all $0 < \delta < 1$ and $0 < \epsilon < 1$, all $n \geq m_{\mcH,R}(\delta,\epsilon)$, and all $P \in \mathbb{P}(\mcX \times \mcC)$,
  \[
    P^n\big[\big\{(\bfx,\bfc) : L_P(\AERM_{\mcH,R}(\bfx,\bfc),R) - \inf_{h \in \mcH} L_P(h,R) \leq \epsilon\big\}\big] \geq 1-\delta.
  \]
\end{definition}


\section{Adversarial VC-dimension and sample complexity}
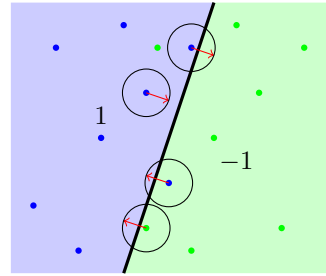
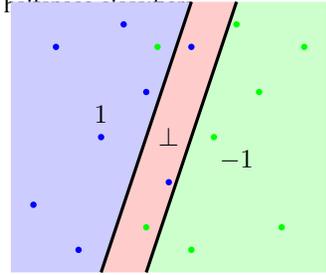
\begin{wrapfigure}{r}{0.3\textwidth}
  \vspace{-10mm}
  \subfloat[Optimal evasion attacks against halfspace classifiers]{
    \begin{tikzpicture}[
      scale = 0.3,
      text height = 1.5ex,
      text depth =.1ex,
      b/.style={very thick}
      ]


      \definecolor{lred}{rgb}{1.0, 0.8, 0.8};
      \definecolor{lgreen}{rgb}{0.8, 1.0, 0.8};
      \definecolor{lblue}{rgb}{0.8, 0.8, 1.0};

      \fill[lblue] (0,0) -- (5,0) -- (9,12) -- (0,12) -- cycle;
      \fill[lgreen] (14,0) -- (5,0) -- (9,12) -- (14,12) -- cycle;

      \foreach \point in {(1,3),(2,10),(3,1),(4,6),(5,11),(6,8),(7,4),(8,10)}
      \fill[blue] \point circle (4pt);

      \foreach \point in {(6,2),(6.5,10),(8,1),(9,6),(10,11),(11,8),(12,2),(13,10)}
      \fill[green] \point circle (4pt); 
      
      \draw[b] (5,0) -- (9,12);

      \draw (4,7) node {$1$};
      \draw (10,5) node {$-1$};

      \draw[->,red] (6,8) -- (7,7.66);
      \draw[->,red] (7,4) -- (6,4.33);
      \draw[->,red] (8,10) -- (9,9.66);

      \draw[->,red] (6,2) -- (5,2.33);

      \draw (6,8) circle (1.054);
      \draw (7,4) circle (1.054);
      \draw (8,10) circle (1.054);
      \draw (6,2) circle (1.054);

    \end{tikzpicture}
  }
  \vspace{-3mm}
  \subfloat[Corrupted halfspace classifier]{
    \begin{tikzpicture}[
      scale = 0.3,
      text height = 1.5ex,
      text depth =.1ex,
      b/.style={very thick}
      ]


      \definecolor{lred}{rgb}{1.0, 0.8, 0.8};
      \definecolor{lgreen}{rgb}{0.8, 1.0, 0.8};
      \definecolor{lblue}{rgb}{0.8, 0.8, 1.0};
      
      \fill[lblue] (0,0) -- (4,0) -- (8,12) -- (0,12) -- cycle;
      \fill[lgreen] (14,0) -- (6,0) -- (10,12) -- (14,12) -- cycle;
      \fill[lred] (4,0) -- (8,12) -- (10,12) -- (6,0) -- cycle;
      
      \draw[b] (4,0) -- (8,12);
      \draw[b] (6,0) -- (10,12);

      \draw (4,7) node {$1$};
      \draw (7,6) node {$\bot$};
      \draw (10,5) node {$-1$};

      \foreach \point in {(1,3),(2,10),(3,1),(4,6),(5,11),(6,8),(7,4),(8,10)}
      \fill[blue] \point circle (4pt); 

      \foreach \point in {(6,2),(6.5,10),(8,1),(9,6),(10,11),(11,8),(12,2),(13,10)}
      \fill[green] \point circle (4pt); 

    \end{tikzpicture}
  }
  \caption{\textbf{Combining the family of hypotheses with the nearness relation $R$.} The top figure depicts some $h \in \mcH$ and the bottom shows $\kappa_R(h) \in \tH$.}
  \vspace{-25mm}
  \label{fig:corrupted_h}
\end{wrapfigure}
In this section, we first describe the notion of corrupted hypotheses, which arise from standard hypothesis classes with the addition of an adversary. We then compute the \VCtext of these hypotheses, which we term the \AVCtext and use it to prove the corresponding Fundamental Theorem of Statistical Learning in the presence of an adversary.

\subsection{Corrupted hypotheses}\label{subsec:corrupted}
The presence of the adversary forces us to learn using a \emph{corrupted} set of hypotheses. Unlike ordinary hypotheses that always output some class, these also output the special value $\bot$ that means ``always wrong''.  This corresponds to the adversary being able to select whichever output does not match $c(x)$. This is illustrated in Figure \ref{fig:corrupted_h}.

Let $\tC = \{-1,1,\bot\}$, where $\bot$ is the special ``always wrong" output.
We can combine the information in $\mcH$ and $R$ into a single set $\tH \subseteq (\mcX \to \tC)$  by defining the following mapping where $\kappa_R : (\mcX \to \mcC) \to (\mcX \to \tC)$ and $\kappa_R(h): \mcX \to \tC$:

\begin{align*}
  \kappa_R(h) &= x \mapsto \begin{cases}
    -1 & \forall y \in N(x) : h(y) = -1 \\
    1 & \forall y \in N(x) : h(y) = 1 \\ 
    \bot & \exists y_0,y_1 \in N(x)  : h(y_0) = -1, h(y_1) = 1.
  \end{cases}
\end{align*}
The corrupted set of hypotheses is then $\tH = \{ \kappa_R(h) : h \in \mcH\}$.

We note that the equivalence between learning an ordinary hypothesis with an adversary and learning a corrupted hypothesis without an adversary allows us to use standard proof techniques to bound the sample complexity.

\begin{lemma}
  \label{lemma:loss-equivalence}
  For any nearness relation $R$ and distribution $P$,
  \[
    L_P(h,R) = L_P(\kappa_R(h), I_{\mcX}).
  \]
\end{lemma}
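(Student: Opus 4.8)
The plan is to prove the identity pointwise in $(x,c)$ and then take expectations. First I would observe that for the right-hand side to even be well defined we must fix the loss of the corrupted label: set $\ell(\bot, c) = 1$ for both $c \in \mcC$, which is precisely what ``always wrong'' encodes. Since $N(x) = \{x\}$ under the identity relation $I_{\mcX}$, the inner maximum on the right collapses, and the claim reduces to showing that for every $x \in \mcX$ and $c \in \mcC$,
\[
  \max_{y \in N(x)} \ell(h(y), c) = \ell(\kappa_R(h)(x), c).
\]

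Next I would do a three-way case split following the definition of $\kappa_R(h)(x)$. If $\kappa_R(h)(x) = c$, then by definition every $y \in N(x)$ has $h(y) = c$, so $\ell(h(y),c) = 0$ throughout and the maximum is $0$, matching $\ell(c,c) = 0$. If $\kappa_R(h)(x)$ is the other element of $\mcC$, then every $y \in N(x)$ is misclassified, the maximum is $1$, and the loss of the constant wrong label against $c$ is also $1$. If $\kappa_R(h)(x) = \bot$, then there exist $y_0, y_1 \in N(x)$ with $h(y_0) = -1$ and $h(y_1) = 1$; whichever of these disagrees with $c$ witnesses that the maximum equals $1$, which is $\ell(\bot, c) = 1$. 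So the two sides coincide in all cases.

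Finally, applying $\E_{(x,c)\sim P}$ to both sides of this pointwise identity yields $L_P(h,R) = L_P(\kappa_R(h), I_{\mcX})$ directly from the definition of the adversarial expected risk (and its specialization to $R = I_{\mcX}$, where the inner maximum is vacuous).

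I do not expect a genuine obstacle; the only points requiring care are pinning down the convention $\ell(\bot,\cdot) = 1$ and checking that the case analysis is exhaustive. The latter holds because $N(x)$ is nonempty by assumption, so the image set $\{h(y) : y \in N(x)\}$ is exactly one of $\{-1\}$, $\{1\}$, or $\{-1,1\}$, which are the three cases in the definition of $\kappa_R$.
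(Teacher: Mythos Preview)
Your proposal is correct and follows essentially the same approach as the paper: prove the pointwise identity $\max_{y \in N(x)} \ell(h(y),c) = \ell(\kappa_R(h)(x),c)$ and then take expectations. The paper compresses your three-case analysis into the single chain $\max_{y \in N(x)} \ell(h(y),c) = \one(\exists y \in N(x)\,.\, h(y) \neq c) = \one(\tilde h(x) \neq c)$, which implicitly uses exactly the convention $\ell(\bot,c)=1$ that you spelled out.
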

\begin{proof}
  Let $\tilde{h} = \kappa_R(h)$.
  For all $(x,c)$,
  \[
    \max_{y \in N(x)} \ell(h(y),c)
    = \one(\exists y \in N(x) \,.\, h(y) \neq c)
    = \one(\tilde{h}(x) \neq c)
    = \max_{y \in \{x\}}  \ell(\tilde{h}(y),c)
  \]
  so $L_P(h,R)
  = \E[ \max_{y \in N(x)} \ell(h(y),c)]
  = \E[ \max_{y \in \{x\}}  \ell(\tilde{h}(y),c)]
  = L_P(\tilde{h}, I_{\mcX})$.
\end{proof}

Now we define the loss classes (composition of loss and classifier functions) $\mcF,\tF \subseteq (\mcX \times \tC \to \{0,1\})$ derived from $\mcH$ and $\tH$ respectively: 
$\mcF = \{\lambda(h) : h \in \mcH\}$ and $\tF = \{\lambda(\tilde{h}) : \tilde{h} \in \tH\}$.
Here, $\lambda : (\mcX \to \tC) \to (\mcX \times \tC \to \{0,1\})$ and $\lambda(h) : \mcX \times \tC \to \{0,1\}$ is defined as $\lambda(h) = (y,c) \mapsto \one(h(y) \neq c)$.
This is the first step in the Rademacher complexity approach to proving sample complexity bounds.
\begin{lemma}[\cite{shalev-shwartz_understanding_2014} Theorem 26.5]
  \label{lemma:rad}
  Let $\hat{f} = \lambda(\kappa(\AERM_{\mcH,R}(\bfx,\bfc)))$.
  With probability $1-\delta$,
  \[
    \E_P(\hat{f}(x,c)) - \inf_{f \in \tF} \E_P(f(x,c)) \leq 2 R(\tF(\bfx,\bfc))) + \sqrt{\frac{32 \log (4/\delta)}{n}}
  \]
  where
  \begin{align*}
    \tF(\bfx,\bfc) &= \{(\tilde{f}(x_0,c_0),\ldots,\tilde{f}(x_{n-1},c_{n-1})) : \tilde{f} \in \tF\}\\
    R(\mcT) &= \frac{1}{n2^n} \sum_{s \in \{-1,1\}^n} \sup_{t \in \mcT} s^Tt
  \end{align*}
\end{lemma}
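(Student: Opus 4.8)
The plan is to read Lemma~\ref{lemma:rad} as nothing more than the specialization of the standard Rademacher-complexity generalization bound for empirical risk minimization (\cite{shalev-shwartz_understanding_2014}, Theorem~26.5) to the corrupted loss class $\tF$, so that essentially all the work lies in checking that the hypotheses of that theorem hold for $\tF$. Two things need to be verified. First, every $\tf \in \tF$ is of the form $\lambda(\tilde h)(y,c) = \one(\tilde h(y) \neq c)$ and hence takes values in $\{0,1\}$; this is exactly the boundedness assumption of Theorem~26.5 with bound $c = 1$, and it is what produces the numerical constant in $\sqrt{32\log(4/\delta)/n}$ (indeed $4c\sqrt{2\ln(4/\delta)/n} = \sqrt{32\ln(4/\delta)/n}$ when $c=1$). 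Second, and more substantively, $\hat f = \lambda(\kappa_R(\AERM_{\mcH,R}(\bfx,\bfc)))$ must actually be an empirical risk minimizer over $\tF$: by Lemma~\ref{lemma:loss-equivalence} applied to the empirical distribution we have $L_{(\bfx,\bfc)}(h,R) = L_{(\bfx,\bfc)}(\kappa_R(h), I_{\mcX}) = \frac1n\sum_i \lambda(\kappa_R(h))(x_i,c_i)$, so minimizing the adversarial empirical risk over $h \in \mcH$ is the same as minimizing $\frac1n\sum_i \tf(x_i,c_i)$ over $\tf \in \tF$, and $\hat f$ attains that minimum. With these two facts the inequality is a direct quotation.

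For completeness I would also recall, rather than re-derive in detail, the three standard ingredients behind that bound. \textbf{(i) Symmetrization:} $\E\big[\sup_{\tf\in\tF}\big(\E_P\tf - \frac1n\sum_i \tf(x_i,c_i)\big)\big]$ is at most twice the \emph{expected} empirical Rademacher complexity $\E[R(\tF(\bfx,\bfc))]$. \textbf{(ii) Bounded differences:} both $\sup_{\tf}\big(\E_P\tf - \frac1n\sum_i \tf(x_i,c_i)\big)$ and $R(\tF(\bfx,\bfc))$ change by $O(1/n)$ when a single sample is replaced, so McDiarmid's inequality concentrates each around its mean; this is what lets one replace the expected Rademacher complexity in (i) by the empirical one that appears in the statement, at the cost of the $\sqrt{\log(1/\delta)/n}$ tail terms. \textbf{(iii) One-sided Hoeffding for a fixed comparator:} for a fixed near-optimal $f^\star \in \tF$, $\frac1n\sum_i f^\star(x_i,c_i) \le \E_P f^\star + \sqrt{\log(1/\delta)/(2n)}$ with high probability. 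Chaining these with the ERM optimality $\frac1n\sum_i \hat f(x_i,c_i) \le \frac1n\sum_i f^\star(x_i,c_i)$ and a union bound over the constantly many bad events yields $\E_P(\hat f) - \inf_{f\in\tF}\E_P(f) \le 2R(\tF(\bfx,\bfc)) + \sqrt{32\log(4/\delta)/n}$.

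I do not expect a genuine obstacle here; the only points requiring care are bookkeeping. The $\argmin$ defining $\AERM_{\mcH,R}$ need not be attained, in which case one runs the argument with an $\eta$-approximate minimizer and lets $\eta\to 0$ at the end. One should also note that the measurability of $\kappa_R(h)$, needed so that $\E_P(\hat f)$ is well defined, is guaranteed by the standing assumptions accompanying Table~\ref{tab:notation}: the preimages $\kappa_R(h)^{-1}(\{1\})$ and $\kappa_R(h)^{-1}(\{-1\})$ are built from the sets $\{x : (x,y)\in R\}$ that were assumed measurable. With these caveats dispatched, Lemma~\ref{lemma:rad} is precisely \cite{shalev-shwartz_understanding_2014}, Theorem~26.5 instantiated at the class $\tF$.
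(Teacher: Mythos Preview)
Your proposal is correct and matches the paper's approach: the paper does not prove Lemma~\ref{lemma:rad} at all but simply cites \cite{shalev-shwartz_understanding_2014}, Theorem~26.5, remarking only that the usual statement about $L_P(\hat h)-\inf_h L_P(h)$ has been rewritten in terms of $\tF$ by expanding the definition of $L_P$. Your write-up in fact goes further than the paper by explicitly verifying the two hypotheses (boundedness of $\tF$ in $\{0,1\}$ and the fact, via Lemma~\ref{lemma:loss-equivalence}, that $\hat f$ is an ERM over $\tF$) and sketching the symmetrization/McDiarmid/Hoeffding argument; none of that extra detail is in the paper, but it is consistent with what the citation is doing.
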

Typically, this result is stated as an upper bound on $L_P(\hat{h}) - \inf_{h \in \mcH} L_P(h)$.
In our case, $\hat{f}$ and $f^*$ do not necessarily arise from any standard (i.e. not corrupted) classifier.
To get an equivalent version of the result that suits our needs, we have simply expanded the standard definition of $L_P(\cdot)$.
\subsection{Adversarial \VCtext: \VCtext for corrupted hypothesis classes}
We begin by providing two equivalent definitions of a \shatter, which we use to determine \VCtext for standard binary hypothesis classes and \AVCtext for their corrupted counterparts.  
\begin{definition}[Equivalent \shatter definitions] \label{def:shatter}
The $i^{\text{th}}$ \shatter of a family of binary classifiers $\mcH \subseteq (\mcX \to \classes)$ is
$\sigma(\mcH,i) = \max_{\bfy \in \mcX^i} |\{ (h(y_0),\ldots,h(y_{i-1})) : h \in \mcH\}|.$ 

The alternate definition of shatter in terms of the loss class $\mcF \subseteq (\mcX \times \mcC \to \{0,1\})$ is
  \[
    \sigma'(\mcF,i) =
    \max_{(\bfy,\bfc) \in \mcX^i \times \mcC^i}
    |\{ (f(y_0,c_0),\ldots,f(y_{i-1},c_{i-1})) : f \in \mcF\}|.
  \]
\end{definition}

Now, we show that these two definitions are indeed equivalent. If $\mcF$ achieves $k$ patterns on $(\bfy,\bfc)$, then $\mcH$ achieves $k$ patterns on $\bfy$. If $\mcH$ achieves $k$ patterns on $\bfy$, then $\mcF$ achieves $k$ patterns on $(\bfy,\bfc)$ for any choice of $\bfc$. Thus, $\sigma(\mcH,i) = \sigma'(\mcF,i)$.

The ordinary VC-dimension is then $ \VC(\mcH) = \sup \{n \in \N : \sigma(\mcH,n) = 2^n\} = \sup \{n \in \N : \sigma'(\lambda(\mcH),n) = 2^n\}.$ The second definition naturally extends to our corrupted classifiers, $\tH \subseteq (\mcX \to \tC)$, because $\lambda(\tH) \subseteq (\mcX \times \classes \to \{0,1\})$.

\begin{definition}[Adversarial \VCtext] The adversarial VC-dimension is
  \[
    \AVC(\mcH,R) = \sup \{n \in \N : \sigma'(\lambda(\tH),n) = 2^n\} .
  \]
\end{definition}

These definitions and lemmas can now be combined to obtain a sample complexity upper bound for PAC-learning in the presence of an evasion adversary.

\begin{theorem}[Sample complexity upper bound with an evasion adversary]
  For a space $\mcX$, a classifier family $\mcH$, and an adversarial constraint $R$,
  there is a universal constant $C$ such that 
  \[
    m_{\mcH,R}(\delta,\epsilon) \leq C \frac{d \log (d/\epsilon) + \log (1/\delta)}{\epsilon^2}.
  \]
where $d = \AVC(\mcH,R)$.\footnote{This can be improved via the chaining technique.}
\end{theorem}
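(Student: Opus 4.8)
The plan is to recognize this statement as the ``upper bound'' half of the Fundamental Theorem of Statistical Learning, transported through the reduction to corrupted hypotheses. By Lemma~\ref{lemma:loss-equivalence}, learning $\mcH$ in the presence of the adversary $R$ is exactly agnostic learning of the corrupted class $\tH$ with no adversary; equivalently, of the binary loss class $\tF = \lambda(\tH) \subseteq (\mcX\times\mcC \to \{0,1\})$. Moreover, since $L_{(\bfx,\bfc)}(h,R) = L_{(\bfx,\bfc)}(\kappa_R(h),I_{\mcX})$, the classifier $\kappa_R(\AERM_{\mcH,R}(\bfx,\bfc))$ is an empirical risk minimizer within $\tH$, and $\hat f = \lambda(\kappa_R(\AERM_{\mcH,R}(\bfx,\bfc)))$ is an empirical risk minimizer within $\tF$. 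So it suffices to prove the classical ERM sample-complexity bound for the class $\tF$, whose VC-dimension is $d = \AVC(\mcH,R)$ by definition.

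First I would invoke Lemma~\ref{lemma:rad}: with probability $1-\delta$ over $(\bfx,\bfc)\sim P^n$, the quantity $\E_P(\hat f) - \inf_{f\in\tF}\E_P(f)$ — which by Lemma~\ref{lemma:loss-equivalence} equals $L_P(\AERM_{\mcH,R}(\bfx,\bfc),R) - \inf_{h\in\mcH}L_P(h,R)$, the gap we must control in Definition~\ref{def:learnability} — is at most $2R(\tF(\bfx,\bfc)) + \sqrt{32\log(4/\delta)/n}$. It then remains to bound the empirical Rademacher complexity $R(\tF(\bfx,\bfc))$ uniformly over $(\bfx,\bfc)$.

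Next I would do this in two steps. (i) Massart's finite-class lemma: since $\tF(\bfx,\bfc)$ is a finite subset of $\{0,1\}^n$, every element has Euclidean norm at most $\sqrt n$, so $R(\tF(\bfx,\bfc)) \leq \sqrt{2\log|\tF(\bfx,\bfc)|/n}$. (ii) Sauer–Shelah: $|\tF(\bfx,\bfc)| \leq \sigma'(\lambda(\tH),n)$, and because $\lambda(\tH)$ is $\{0,1\}$-valued with VC-dimension exactly $\AVC(\mcH,R) = d$, we get $\sigma'(\lambda(\tH),n) \leq \sum_{i=0}^{d}\binom{n}{i} \leq (en/d)^d$ for $n\geq d$. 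This is the only step where the corrupted structure could bite, and it is handled for free: although $\tH$ takes the three values $\{-1,1,\bot\}$, the loss class $\lambda(\tH)$ is binary-valued, so the ordinary Sauer–Shelah lemma applies to it verbatim — which is precisely why the framework was set up through $\lambda$ and why $\AVC$ was defined via $\sigma'(\lambda(\tH),n)$.

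Combining, with probability $1-\delta$ the excess risk is at most $2\sqrt{2d\log(en/d)/n} + \sqrt{32\log(4/\delta)/n}$. Setting each term to $\epsilon/2$ and inverting the resulting inequality in $n$ by the standard manipulation used in the proof of the Fundamental Theorem — using that $\log(n/d) = O(\log(d/\epsilon))$ once $n = \Omega\big(d\log(d/\epsilon)/\epsilon^2\big)$ — yields $m_{\mcH,R}(\delta,\epsilon) \leq C\,\big(d\log(d/\epsilon) + \log(1/\delta)\big)/\epsilon^2$ for a universal constant $C$. The main obstacle is essentially bookkeeping rather than mathematics: verifying that the measurability assumptions in the footnotes make $\tH$ (in particular the preimage $\kappa_R(h)^{-1}(\bot)$) a legitimate hypothesis class so that Lemmas~\ref{lemma:loss-equivalence} and~\ref{lemma:rad} apply, and carrying out the transcendental sample-size inversion cleanly. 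The genuinely hard content — actually evaluating $\AVC(\mcH,R)$ — lives in the later sections and is not needed here.
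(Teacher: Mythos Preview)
Your proposal is correct and follows exactly the route the paper takes: invoke Lemma~\ref{lemma:loss-equivalence} to reduce to the corrupted class, apply Lemma~\ref{lemma:rad}, then bound the empirical Rademacher complexity via Massart's lemma and the Sauer--Shelah lemma applied to the binary loss class $\lambda(\tH)$, whose VC-dimension is $\AVC(\mcH,R)$ by definition. The paper's own proof is a one-sentence citation of these four ingredients, so you have simply written out the details it omits.
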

\begin{proof}
  This follows from Lemma~\ref{lemma:loss-equivalence}, Lemma~\ref{lemma:rad}, the Massart lemma on the Rademacher complexity of a finite class \cite{shalev-shwartz_understanding_2014}, and the Shelah-Sauer lemma \cite{shalev-shwartz_understanding_2014}. 
\end{proof}

\TODO{expand this
  \subsection{Consequences}
  Having computed the VC-dimension of a class, we can apply the standard machinery of statistical learning theory to prove upper bounds on $L(P,\hat{f}_n) - L(P,f^*)$.
  In the case of halfspaces and an adversary bounded by the $\ell_2$ metric, the dimension is unchanged so these bounds will be unchanged.
  The adversary can have a significant effect on $L(P,f^*)$, the optimal performace, but if we modify empirical risk minimization appropriately, the adversary does not interfere with learning.
}


\section{The adversarial VC-dimension of halfspace classifiers}
In this section, we consider an adversary with a particular structure, motivated by the practical $\ell_p$ norm-based constraints that are usually imposed on evasion adversaries in the literature \cite{goodfellow2014explaining,Carlini16}. We then derive the \AVCtext for halfspace classifiers corrupted by this adversary and show that it remains equal to the standard \VCtext.
\begin{definition}[Convex constraint on binary adversarial relation]
  Let $\mcB$ be a nonempty, closed, convex, origin-symmetric set.

  The seminorm derived from $\mcB$ is $\|x\|_{\mcB} = \inf \{\epsilon \in \R_{\geq 0} : x \in \epsilon \mcB\}$
  and the associated distance $d_{\mcB}(x,y) = \|x-y\|_{\mcB}$.
  
  Let $V_{\mcB}$ be the largest linear subspace contained in $\mcB$.
  
  The adversarial constraint derived from $\mcB$ is $R = \{(x,y) : y-x \in \mcB\}$, or equivalently $N(x) = x + \mcB$.
\end{definition}
Since $\mcB$ is convex and contains the zero-dimensional subspace $\{\zero\}$, $V_{\mcB}$ is well-defined. Note that this definition of $R$ encompasses all $\ell_p$ bounded adversaries, as long as $p \geq 1$.

\begin{definition}
  Let $\mcH$ be a family of classfiers on $\mcX = \R^d$.

  For an example $x \in \mcX$ and a classifier $h \in \mcH$, define the signed distance to the boundary to be
  \[
    \delta_{\mcB}(h,x,c) = c \cdot h(x) \cdot \inf_{y \in \mcX : h(y) \neq h(x)} d_{\mcB}(x,y)
  \]
  
  For a list of examples $\bfx = (x_0,\ldots,x_{n-1}) \in \mcX^n$, define the signed distance set to be
  \[
    D_{\mcB}(\mcH,\bfx,\bfc) = \{(\delta_{\mcB}(h,x_0),\ldots,\delta_{\mcB}(h,x_{n-1})) : h \in \mcH\}
  \]
\end{definition}

Let $\mcX = \R^d$ and let $\mcH$ be the family of halfspace classifiers: $\{(x \mapsto \sgn(a^Tx-b)) : a \in \R^d, b \in \R \}$.
For simplicity of presentation, we define $\sgn(0) = \bot$, i.e. we consider classifiers that do not give a useful value on the boundary. It is well known that the VC-dimension of this family is $d+1$ \cite{shalev-shwartz_understanding_2014}.
Our result can be extended to other variants of halfspace classifiers.
In Appendix~\ref{app}, we provide an alternative proof that applies to a more general definition.

For halfspace classifiers, the set $D_{\mcB}(\mcH,\bfx,\bfc)$ is easily characterized.
\begin{theorem}
  Let $\mcH$ be the family of halfspace classfiers of $\mcX = \R^d$.
  Let $\mcB$ be a nonempty, closed, convex, origin-symmetric set.
  Let $R = \{(x,y) : y-x \in \mcB\}$.
  Then $\AVC(\mcH,R) = d+1 - \dim(V_{\mcB})$.
  In particular, when $\mcB$ is a bounded $\ell_p$ ball, $\dim(V_{\mcB}) = 0$, giving $\AVC(\mcH,R) = d+1$.
\end{theorem}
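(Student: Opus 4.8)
The plan is to work directly with the loss patterns realized by corrupted halfspace classifiers. For $h=\sgn(a^Tx-b)$ the key observation is that $\kappa_R(h)$ labels a point $y$ with $c$ (equivalently $\lambda(\kappa_R(h))(y,c)=0$) precisely when the whole neighborhood $y+\mcB$ lies strictly on the $c$-side of the hyperplane, i.e.\ when $c(a^Ty-b)>s(a)$, where $s(a)=\sup_{v\in\mcB}a^Tv$ is the support function of $\mcB$: a sublinear function that is finite exactly on $V_{\mcB}^\perp$ and zero exactly on $(\lspan\mcB)^\perp$. Hence the loss vector on a configuration $(\bfy,\bfc)=((y_j,c_j))_{j<n}$ is $(\one(c_j(a^Ty_j-b)\le s(a)))_j$, which equals $(\one(\delta_{\mcB}(h,y_j,c_j)\le 1))_j$ since $\delta_{\mcB}(h,y_j,c_j)=c_j(a^Ty_j-b)/s(a)$ for a halfspace; so corrupted-shattering of $(\bfy,\bfc)$ means the signed-distance set $D_{\mcB}(\mcH,\bfy,\bfc)$ meets all $2^n$ cells obtained by splitting each coordinate into $(-\infty,1]$ and $(1,\infty)$. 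Both inequalities come from examining this set.

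For the lower bound $\AVC(\mcH,R)\ge d+1-\dim(V_{\mcB})$, I would work inside $V_{\mcB}^\perp\cong\R^m$ with $m=d-\dim(V_{\mcB})$, where $\mcB':=\mcB\cap V_{\mcB}^\perp$ is a bounded symmetric convex body and $s$ restricts to a finite sublinear function. Take $m+1$ affinely independent points $z_0,\dots,z_m$ and rescale them by a large $T$. Given a target pattern $e\in\{0,1\}^{m+1}$ and labels $\bfc$, set $c_j'=c_j$ where $e_j=0$ and $c_j'=-c_j$ where $e_j=1$, solve the nonsingular system $\hat a^Tz_j-\hat b=c_j'$ for $(\hat a,\hat b)$, and use the halfspace with normal $a=\hat a/T\in V_{\mcB}^\perp$ and offset $b=\hat b$; then $c_j'(a^T(Tz_j)-b)=1$, so on the $e_j=0$ coordinates $c_j(a^T(Tz_j)-b)=1>s(a)=s(\hat a)/T$ once $T$ is large, while on the $e_j=1$ coordinates $c_j(a^T(Tz_j)-b)=-1\le s(a)$. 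Choosing $T$ larger than $\max s(\hat a)$ over the finitely many pairs $(e,\bfc)$ realizes all $2^{m+1}$ patterns, so the rescaled configuration is shattered.

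For the upper bound $\AVC(\mcH,R)\le d+1-\dim(V_{\mcB})$, I would first reduce to $\mcB$ bounded: any halfspace with $a\not\perp V_{\mcB}$ has $s(a)=\infty$ and realizes only the all-ones loss vector, while for $a\perp V_{\mcB}$ the loss vector depends only on $s(a)=\sup_{v\in\mcB'}a^Tv$ and on the orthogonal projections $\bar y_j$ of the points to $V_{\mcB}^\perp$. So it suffices to show corrupted halfspaces of $\R^m$ with a bounded constraint cannot shatter $n=m+2$ points. Given any $(\bfy,\bfc)\in(\R^m)^{m+2}\times\{\pm1\}^{m+2}$, the $y_j$ are affinely dependent, so pick a Radon vector $\alpha\ne 0$ with $\sum_j\alpha_j y_j=0$, $\sum_j\alpha_j=0$, and, after replacing $\alpha$ by $-\alpha$ if needed, $\sum_j c_j\alpha_j\le 0$; this automatically forces $\{j:c_j\alpha_j<0\}\ne\emptyset$. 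I claim the pattern $e^*_j=\one(c_j\alpha_j>0)$ is unrealizable: if some $(a,b)$ realized it, then pairing the defining inequalities with $|\alpha_j|$ and using $\sgn(\alpha_j)=-c_j$ on $\{c_j\alpha_j<0\}$ and $\sgn(\alpha_j)=c_j$ on $\{c_j\alpha_j>0\}$ gives $0=\sum_j\alpha_j(a^Ty_j-b)<s(a)\sum_j c_j\alpha_j\le 0$, a contradiction (the strict inequality comes from the nonempty set $\{c_j\alpha_j<0\}$, the last step from $s(a)\ge 0$). One checks $e^*$ is not the all-ones vector (it is $0$ on $\{c_j\alpha_j<0\}$) and not a pattern of a constant classifier $h\equiv\pm1$ (each would force all $\alpha_j$ to share one sign, contradicting $\sum_j\alpha_j=0$), so $e^*$ is missed altogether. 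Hence $\AVC(\mcH,R)\le m+1$, and undoing the reduction gives the claim; since a bounded $\ell_p$ ball has $V_{\mcB}=\{\zero\}$, the final sentence follows.

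The step I expect to be the main obstacle is the upper bound, specifically choosing the right pattern to exclude: the obvious guesses (the Radon-partition pattern, or ``robustly classify everything correctly'') work only in certain sign regimes, and the fix is to use the pattern determined by the Radon vector \emph{twisted by} the labels $\bfc$ and to normalize the sign of $\alpha$ by $\sum_j c_j\alpha_j\le 0$, which makes a single application of the Radon identity $\sum_j\alpha_j(a^Ty_j-b)=0$ close the argument. Getting this bookkeeping right, together with cleanly absorbing the degenerate classifiers ($s(a)\in\{0,\infty\}$ and $a=0$) and the $V_{\mcB}$-reduction, is the delicate part; the remainder is the standard support-function computation and Radon's theorem.
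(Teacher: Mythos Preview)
Your proof is correct and is essentially the paper's argument in different clothing: the paper finds a nonzero $w$ orthogonal to $\lspan D_{\mcB}(\mcH,\bfx,\bfc)$ with $w^T\one\ge 0$ and shows the sign pattern of $w$ is unrealized, and your Radon vector $\alpha$ is exactly this $w$ under the substitution $w_j=c_j\alpha_j$ (orthogonality to the span becomes $\sum_j\alpha_j\bar y_j=0$ and $\sum_j\alpha_j=0$, your sign normalization $\sum_j c_j\alpha_j\le 0$ is the paper's $w^T\one\ge 0$ up to a global sign flip, and the excluded pattern $e^*_j=\one(c_j\alpha_j>0)$ matches the sign pattern of $w$). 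The lower bounds likewise coincide---both choose affinely independent points in $V_{\mcB}^\perp$ and dilate until the perturbation budget $s(a)$ is negligible---and your separate exclusion of the constant-classifier patterns is harmless but redundant, since your inequality $0<s(a)\sum_j c_j\alpha_j\le 0$ already covers $a=0$.
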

\begin{proof}
  First, we show $\AVC(\mcH,R) \leq d+1 - \dim(V_{\mcB})$. Define $\|w\|_{\mcB^*} = \sup_{y \in \mcB} w^Ty$, the dual seminorm associated with $\mcB$.
  
  For any halfspace classifier $h$, there are $a \in \R^d$ and $b \in \R$ such that $f(x) = \sgn(g(x))$ where $g(x) = a^Tx - b$.
  Suppose that $a^Ty = 0$ for all $y \in V_{\mcB}$.
  Let $\mcH'$ be the set of classifiers that are represented by such $a$.
  For a labeled example $(x,c)$, the signed distance to the boundary is $c(a^Tx-b)/\|a\|_{\mcB^*}$.
  Any point on the boundary can be written as $x - \epsilon z$ for some $\epsilon \geq 0$ and $z \in \mcB$.
  We have $a^T(x - \epsilon z) - b = 0$ so
  \begin{equation}
    d_{\mcB}(x,x - \epsilon z) \geq \epsilon = \frac{a^Tx - b}{a^Tz} \geq \frac{a^Tx - b}{\|a\|_{\mcB^*}}.\label{distance-lb}
  \end{equation}
  Because $\mcB$ is closed, there is some vector $z^* \in \mcB$ that maximizes $a^Tz$.
  The point $x - \frac{a^Tx - b}{\|a\|_{\mcB^*}}z^*$ is on the boundary, so the inequality \ref{distance-lb} is tight.
  
  If we add the restriction $\|a\|_{\mcB^*} = 1$, each $h \in \mcH'$ has a unique representation as $\sgn \circ g$.
  For inputs from the set $\mcG = \{(a,b)\in \R^{d+1} : \|a\|_{\mcB^*} = 1,\, \forall y \in V_{\mcB}\,.\, a^Ty = 0 \}$, the function $(a,b) \mapsto \delta_{\mcB}(f,x,c)$ is linear.
  Thus the function $(a,b) \mapsto (\delta_{\mcB}(f,x_0,c_0),\ldots,\delta_{\mcB}(f,x_{n-1},c_{n-1}))$ is also linear.
  $\mcG$ is a subset of a vector space of dimension $d+1 - \dim V_{\mcB}$, so
  \[
    \dim(\lspan(D_{\mcB}(\mcH',\bfx,\bfc))) \leq d+1 - \dim V_{\mcB}
  \]
  for any choices of $\bfx$ and $\bfc$.

  Now we consider the contribution of the classifiers in $\mcH \setminus \mcH'$.
  These are represented by $a$ such that $a^Ty \neq 0$ for some $y \in V_{\mcB}$, or equivalently $\|a\|_{\mcB^*} = \infty$.
  In this case, for all $(x,c) \in \R^d \times \mcC$, $a^T(x+ V_{\mcB}) = \R$.
  Thus $x + V_{\mcB}$ intersects the classifier boundary, $\tilde{h}(x) = \bot$, and the distance from $x$ to the classifier boundary is zero: $\delta_{\mcB}(h,x,c) = 0$.
  Thus $D_{\mcB}(\mcH \setminus \mcH',\bfx,\bfc) = \{\zero\}$, which is already in $\lspan(D_{\mcB}(\mcH',\bfx,\bfc))$.

  Let $U = \lspan(D_{\mcB}(\mcH,\bfx,\bfc))$, let $k = \dim(U)$, and let $n > k$.
  Suppose that there is a list of examples $\bfx \in \mcX^n$ and a corresponding list of labels $\bfc \in \mcC^n$ that are shattered by the degraded classifiers $\tH$.
  Let $\eta \in \{0,1\}^n$ be the error pattern achieved by the classifier $h$.
  Then $\eta_i = \one(\delta_{\mcB}(h,x_i,c_i) \leq 1)$.
  In other words, the classification is correct when $c_i$ and $h(x_i)$ have the same sign and the distance from $x_i$ to the classification boundary is greater than $1$.

  Let $\one$ be the vector of all ones.
  For each error pattern $\eta$, there is some $h \in \mcH$ that achieves it if and only if there is some point in $D_{\mcB}(\mcH,\bfx,\bfc) - \one$ with the corresponding sign pattern.

  If $k < n$, then by the following standard argument, we can find a sign pattern that is not achieved by any point in $U - \one$.
  Let $w \in \R^n$ satisfy $w^Tz = 0$ for all $z \in U$, $w \neq \zero$, and $w^T\one \geq 0$.
  There is a subspace of $\R^n$ of dimension $n-k$ of vectors satisfying the first condition.
  If $n > k$, this subspace contains a nonzero vector $u$.
  At least one of $u$ and $-u$ satisfies the third condition.

  If a point $z \in U - \one$ has the same sign pattern as $w$, then $w^Tz > 0$.
  However, we have chosen $w$ such that $w^Tz \leq 0$.
  Thus the classifier family $\mcH$ does not achieve all $2^n$ error patterns, which contradicts our assumption about $(\bfx,\bfc)$.

  
  Now we show $\AVC(\mcH,R) \geq d+1 - \dim(V_{\mcB})$ by finding $(\bfx,\bfc)$ that are shattered by $\tH$.

  Let $t = d - \dim(V_{\mcB})$, $x_0 = \zero$, and $(x_1,\ldots,x_t)$ be a basis for the subspace orthogonal to $V_{\mcB}$. Then
  \[
    \epsilon = \min_{0 \leq i < j \leq t} \|x_i - x_j\|_{\mcB^*} > 0,
  \]
  so the example list $\frac{3}{\epsilon}(x_0,\ldots,x_t)$ is shattered by $\tH$ for any choice of $\bfc$.
\end{proof}


\section{Adversarial VC Dimension can be larger}
\begin{wrapfigure}{r}{0.35\textwidth}
  \centering
  \vspace{-8mm}
  \begin{tikzpicture}[
  	text height = 1.5ex,
  	text depth =.1ex,
  	b/.style={very thick}
  	]
  	
  	\foreach \i in {-2,-1,0,1,2}
  	\foreach \j in {-2,-1,0,1,2}
  	\fill (\i,\j) circle[radius=0.1];
  	
  	\draw (-0.2,1.2) -- (0.2,1.2) -- (0.2,0.8) -- (-0.2,0.8) -- cycle;
  	\draw (-1.2,2.2) -- (1.2,2.2) -- (1.2,-0.2) -- (-1.2,-0.2) -- cycle;
  	\draw (-1.2,1.2) -- (-0.8,0.8);
  	\draw (-1.2,0.8) -- (-0.8,1.2);
  	\draw (1.2,-1.2) -- (0.8,-0.8);
  	\draw (1.2,-0.8) -- (0.8,-1.2);
      \end{tikzpicture}
  \caption{The examples $x_0 = (-1,1)$ and $x_1 = (1,-1)$ are marked with crosses.
  The function $h_{(0,1)} \in \mcH$ maps the smaller square to $1$ and everything else to $-1$.
  The degraded function $\tilde{h}_{(0,1)} \in \tH$ maps the larger square to $\bot$ and everything else to $-1$.
  Observe that $\tilde{h}_{(0,1)}(x_0) = \bot$ and $\tilde{h}_{(0,1)}(x_1) = -1$.}
\end{wrapfigure}
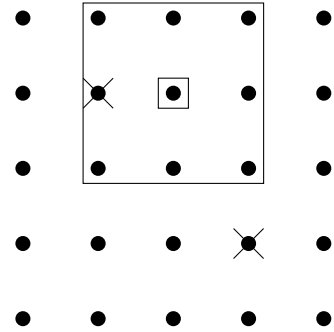
We have shown in the previous section that the \AVCtext can be smaller than or equal to the standard \VCtext. Here, we provide explicit constructions for the counter-intuitive case when \AVCtext can be arbitrarily larger than the \VCtext. 
%
%
%

\begin{theorem}
  For any $d \in \N$, there is a space $\mcX$, an adversarial constraint $R \subseteq \mcX \times \mcX$, and a hypothesis class $\mcH: \mcX \to \mcC$ such that $\VC(\mcH) = 1$ and $\AVC(\mcH,R) \geq d$.
\end{theorem}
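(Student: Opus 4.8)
The plan is to exploit the way corruption \emph{inflates} a hypothesis whose positive region is a single point. If $h$ is the indicator of $\{v\}$ (so $h(v)=1$ and $h(y)=-1$ for $y\neq v$), then $\kappa_R(h)(x)\neq -1$ precisely when $v\in N(x)$, since in that case the adversary can push $x$ onto $v$ and force an error. Hence a hypothesis class made of point indicators — each of which alone ``covers'' only one point — has, after corruption, the following set of error patterns on a list $\bfx=(x_1,\dots,x_d)$ with all-negative labels $\bfc=(-1,\dots,-1)$: exactly $\{(\one(v\in N(x_1)),\dots,\one(v\in N(x_d))) : v\in\mcX\}$. Since the relation $R$ is ours to pick, so is this family of patterns, and the whole problem reduces to choosing neighborhoods $N(x_1),\dots,N(x_d)$ whose Venn diagram has all $2^d$ cells nonempty.

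Concretely, I would take $\mcX=\{x_1,\dots,x_d\}\sqcup\{w_\eta : \eta\in\{0,1\}^d\}$, the class $\mcH=\{h_z : z\in\mcX\}$ with $h_z$ the indicator of $\{z\}$, and the relation $R=\{(x_i,w_\eta) : 1\le i\le d,\ \eta_i=1\}\cup\{(w_\eta,w_\eta) : \eta\in\{0,1\}^d\}$, where the second block only guarantees that every neighborhood is nonempty, so that $N(x_i)=\{w_\eta : \eta_i=1\}$. First I would verify $\VC(\mcH)=1$: some single point is shattered (pair $h_z$ with any $h_{z'}$, $z'\neq z$), while no two-point set is shattered, since no $h\in\mcH$ equals $1$ at two distinct points. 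Then, on $(\bfx,\bfc)$ with $\bfc=(-1,\dots,-1)$, the error of $\kappa_R(h_{w_\eta})$ at $x_i$ is $\one(\exists y\in N(x_i) : h_{w_\eta}(y)\neq -1)=\one(w_\eta\in N(x_i))=\one(\eta_i=1)$, so the error pattern is $\eta$ itself. Letting $\eta$ range over $\{0,1\}^d$ produces all $2^d$ patterns, so $(\bfx,\bfc)$ is shattered by $\tH$ and $\AVC(\mcH,R)\geq d$.

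I do not expect a genuinely hard step: the content is the reduction in the first paragraph, after which the construction is essentially forced. The two places that need care are (i) pinning the \VCtext at exactly $1$ — it must shatter some point (so the class is not constant) and shatter no pair (so it is not larger) — and (ii) the $\bot$ bookkeeping: for the error of $h_z$ against the label $-1$ it makes no difference whether $\kappa_R(h_z)(x_i)$ is $1$, $-1$, or $\bot$, because the loss is simply $\one\bigl(N(x_i)\cap h_z^{-1}(1)\neq\emptyset\bigr)$ — this is why point indicators behave cleanly no matter how large the neighborhoods are. Finally, I would note that the same idea admits a geometric realization in the spirit of the figure: take $\mcX=\R^d$, $\mcH$ the point indicators, $R$ the $\ell_\infty$ unit ball ($N(x)=x+[-1,1]^d$), and $x_i=\tfrac{3}{2}e_i$; then the vectors $v_S$ defined by $(v_S)_i=\tfrac{3}{4}$ for $i\in S$ and $(v_S)_i=-1$ for $i\notin S$ satisfy $\|v_S-x_i\|_\infty\le 1\iff i\in S$, so all $2^d$ cells of the neighborhoods are occupied; but the finite combinatorial $\mcX$ and $R$ above are the most economical route to the statement for every $d$.
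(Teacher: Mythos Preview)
Your proposal is correct and follows essentially the same approach as the paper: take $\mcH$ to be the class of point indicators (so $\VC(\mcH)=1$ because no hypothesis labels two distinct points $1$), then with all labels $-1$ the corrupted error pattern on $(x_1,\dots,x_d)$ records exactly which neighborhoods $N(x_i)$ contain the indicator point, and one chooses the space and relation so that all $2^d$ such membership patterns occur. The only difference is cosmetic: the paper realizes this with $\mcX=\Z^d$ and an $\ell_\infty$ adversary of budget $1$ (taking $x_i$ to be the all-ones vector with a $-1$ in coordinate $i$ and using indicator points $y\in\{0,1\}^d$), whereas you first build the Venn diagram abstractly and then note a metric variant in $\R^d$.
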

\begin{proof}
Let $\mcX = \Z^d$.
Let $\mcH = \{h_x : x \in \mcX\}$,
where
\[
  h_{x}(y) =
  \begin{cases}
    1 & y = x\\
    -1 & y \neq x.\\
  \end{cases}  
\]
The VC dimension of this family is 1 because no classifier outputs the labeling $(1,1)$ for any pair of distinct examples.

Consider the adversary with an $\ell_{\infty}$ budget of 1.
No degraded classifier will ever output 1, only 0 and $\bot$.
Take $\bfx = (x_0,\ldots,x_{d-1}) \in (\Z^d)^d$:
\[
  (x_i)_j = \begin{cases}
    -1 & i = j\\
    1 & i \neq j.
  \end{cases}
\]
Now consider the $2^d$ classifiers that are the indicators for $y \in \{0,1\}^{[d]}$: $\tilde{h}_y = \kappa(h_y)$.
Observe that
\begin{align*}
  \tilde{h}_y(x_i) &= \begin{cases}
    -1 & y_i = 1\\
    \bot & y_i = 0
  \end{cases}
\end{align*}
because if $y_i = 1$ then $d_{\infty}(x_i,y) = 2$ but if $y_i = 1$ then $d_{\infty}(x_i,y) = 1$.
Thus $(f_y(x_0),\ldots,f_y(x_{d-1}))$ contains $\bot$ at each index that $y$ contains $1$.
If the examples are all labeled with $-1$, this subset of the degraded classifier family achieves all $2^d$ possible error patterns.
The adversarial VC dimension is at least $d$.
\end{proof}

This example shows that an adversary can not only affect the optimal loss, as shown in Lemma \ref{adv_loss}, but can also slow the convergence rate to the optimal hypothesis.


\section{Related work and Concluding Remarks}\label{sec:discussion}
In this paper, we are the first to demonstrate sample complexity bounds on PAC-learning in the presence of an evasion adversary. We now compare with related work and conclude.

\subsection{Related Work}
The body of work on attacks and defenses for machine learning systems is extensive as described in Section \ref{sec:intro} and thus we only discuss the closest related work here. We refer interested readers to extensive recent surveys \cite{papernot2016towards,liu2018survey,biggio2017wild} for a broader overview.

\noindent \textbf{PAC-learning with poisoning adversaries:} Kearns and Li \cite{kearns1993learning} studied learning in the presence of a training time adversary, extending the more benign framework of Angluin and Laird \cite{angluin1988learning} which looked at noisy training data.

\noindent \textbf{Classifier-specific results:} Wang et al. \cite{wang2017analyzing} analyze the robustness of nearest neighbor classifiers while Fawzi et al. \cite{fawzi2018analysis,fawzi2016robustness} analyze the robustness of linear and quadratic classifiers under both adversarial and random noise. Hein and Andriushchenko \cite{hein2017formal} provide bounds on the robustness of neural networks of up to one layer while Weng et al. \cite{weng2018evaluating} use extreme value theory to bound the robustness of neural networks of arbitrary depth. Both these works assume Lipschitz continuous functions. In contrast to our work, all these show how robust a \emph{given classifier} is and do not address the issue of learnability and sample complexity.

\noindent \textbf{Distribution-specific results:} Schimdt et al. \cite{schmidt2018adversarially} study the sample complexity of learning a mixture of Gaussians as well as Bernoulli distributed data in the presence of $\ell_{\infty}$-bounded adversaries. For the former, they show that for all classifiers, the sample complexity increases by an order of $\sqrt{d}$, while it only increases for halfspace classifiers for the latter distribution. Gilmer et al. \cite{gilmer2018adversarial} analyze the robustness of classifiers for a distribution consisting of points distributed on two concentric spheres. In contrast to these papers, we prove our results in a \emph{distribution-agnostic} setting. 

\noindent \textbf{Wasserstein distance-based constraint:} Sinha et al. \cite{sinha2017certifiable} consider a different adversarial constraint, based on the Wasserstein distance between the benign and adversarial distributions. They then study the sample complexity of Stochastic Gradient Descent for minimizing the relaxed Lagrangian formulation of the learning problem with this constraint. Their constraint allows for different samples to be perturbed with different budgets while we study a sample-wise constraint on the adversary.

\noindent \textbf{Objective functions for robust classifiers:} Raghunathan et al. \cite{raghunathan2018certified} and Kolter and Wong \cite{kolter2017provable} take similar approaches to setting up a solvable optimization problem that approximates the worst-case adversary in order to carry out adversarial training. Their focus is not on the sample complexity needed for learning, but rather on provable robustness achieved against $\ell_{\infty}$-bounded adversaries by changing the training objective.

\subsection{Concluding remarks}
While our results provide a useful theoretical understanding of the problem of learning with adversaries, the nature of the $0$-$1$ loss prevents the efficient implementation of Adversarial ERM to obtain robust classifiers. In practice, recent work on adversarial training \cite{goodfellow2014explaining,tramer2017ensemble,madry_towards_2017}, has sought to improve the robustness of classifiers by directly trying to find a classifier that minimizes the Adversarial Expected Risk, which leads to a saddle point problem \cite{madry_towards_2017}. A number of heuristics are used to enable the efficient solution of this problem, such as replacing the $0$-$1$ loss with smooth surrogates like the logistic loss and approximating the inner maximum by a Projected Gradient Descent (PGD)-based adversary \cite{madry_towards_2017} or by an upper bound \cite{raghunathan2018certified}. Our framework now allows for an analysis of the underlying PAC learning problem for these approaches. An interesting direction is thus to find the \AVCtext for more complex classifier families such as piece-wise linear classifiers and neural networks. Another natural next step is to understand the behavior of convex learning problems in the presence of adversaries, in particular the Regularized Loss Minimization framework.  


\subsubsection*{Acknowledgments}
This  work  was  supported  by  the  National  Science Foundation under grants CNS-1553437, CIF-1617286 and CNS-1409415, by Intel through the Intel Faculty Research Award and by the Office of Naval Research through the Young Investigator Program (YIP) Award.

{\small
\bibliographystyle{plain}
\bibliography{nips2018.bib}}
\bcomment{
  For the following lemma, it is convenient to represent error patterns are vectors with entries from $\{-1,1\}$ rather than $\{0,1\}$.
  In this convention, $-1$ indicates incorrect classification (loss of $1$) and $1$ indicates correct classification (loss of $0$).}
\appendix
\section{Alternative upper bound for halfspace classifiers}
\label{app}

In this section, we will define halfspace classifiers in a way that is slightly more general than the most common option.
Many authors do not specify what a halfspace classifier does to a point on it boundary.
Others restrict the classifier to take a constant value there.
We will require only the following property: for a halfspace classifier $h$, the set $\{x \in \R^n : h(x) = 1\}$ and the set $\{x \in \R^n : h(x) = -1\}$ must both be convex.

Now we present an upper bound on adversarial VC-dimension that applies for this more general definition.

\begin{lemma}
  Let $V = \R^d / V_{\mcB}$, let $\mcH$ be the family of halfspace classifiers of $V$, and let $\tF = \{\lambda(\kappa_R(h)) : h \in \mcH\}$.
  For $n \geq d+2 - \dim V_{\mcB}$, let $\bfx \in V^n$ and $\bfc \in \{-1,1\}^n$.
  Then there is some error pattern $\eta \in \{-1,1\}^n$ such that $\tf(\bfx,\bfc) \neq \eta$ for all $\tf \in \tF$.
\end{lemma}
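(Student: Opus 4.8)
The plan is to transcribe the upper‑bound half of the previous section's theorem, replacing the explicit $(a,b)$‑representation of a halfspace by a separating‑hyperplane representation that survives the weaker definition used here. First I would reduce to error patterns: by Lemma~\ref{lemma:loss-equivalence}, for $h\in\mcH$ and $\tf=\lambda(\kappa_R(h))$ the coordinate $\tf(\bfx,\bfc)_i$ equals $\max_{y\in N(x_i)}\ell(h(y),c_i)$, so $\tf(\bfx,\bfc)_i=0$ iff $x_i+\bar{\mcB}\subseteq\{y:h(y)=c_i\}$, where $\bar{\mcB}$ is the image of $\mcB$ in $V=\R^{d}/V_{\mcB}$. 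Since $V_{\mcB}$ is the largest subspace contained in $\mcB$, the set $\bar{\mcB}$ contains no line and, being closed, convex and origin‑symmetric, is therefore bounded, so $\|\cdot\|_{\bar{\mcB}}$ is a genuine norm on $V$; and if $\bar{\mcB}$ happens not to be full‑dimensional the adversary is only weaker, so that case is harmless. Because $h$ maps into $\{-1,1\}$, the sets $\{h=1\}$ and $\{h=-1\}$ partition $V$ into two convex pieces.

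Next I would linearize. For each $h$ fix a separating hyperplane $\langle a_h,\cdot\rangle=b_h$ with $a_h\neq\zero$, $\{h=1\}\subseteq\{\langle a_h,\cdot\rangle\ge b_h\}$ and $\{h=-1\}\subseteq\{\langle a_h,\cdot\rangle\le b_h\}$ (two disjoint convex sets in a finite‑dimensional space can be separated), normalized so that $\|a_h\|_{\bar{\mcB}^{*}}:=\sup_{z\in\bar{\mcB}}\langle a_h,z\rangle=1$, and set $\psi_i(h)=c_i(\langle a_h,x_i\rangle-b_h)$. Using $\inf_{z\in\bar{\mcB}}c_i(\langle a_h,x_i+z\rangle-b_h)=\psi_i(h)-1$ together with the fact that $\{\langle a_h,\cdot\rangle>b_h\}$ misses $\{h=-1\}$ (and symmetrically), one gets
\[
  \psi_i(h)>1\ \Longrightarrow\ \tf(\bfx,\bfc)_i=0\ \Longrightarrow\ \psi_i(h)\ge 1 .
\]
The map $(a,b)\mapsto\big(c_i(\langle a,x_i\rangle-b)\big)_{i<n}$ is linear from $V\times\R$ to $\R^{n}$, so every vector $\psi(h)$ lies in one subspace $W$ with $\dim W\le\dim V+1=d+1-\dim V_{\mcB}$, which is strictly smaller than $n$ by hypothesis.

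Then I would run the standard sign‑pattern argument: since $\dim W<n$, pick $w\neq\zero$ with $w\perp W$ and $w^{T}\one\ge 0$ (flip the sign if needed); then $w$ has a strictly positive coordinate, for otherwise $w\le\zero$ and $w^{T}\one\ge 0$ force $w=\zero$. Put $\eta^{*}_i=\one(w_i\le 0)$. If some $h$ realized $\eta^{*}$, then with $z=\psi(h)-\one$ the displayed implications give $z_i\ge 0$ wherever $w_i>0$ and $z_i\le 0$ wherever $w_i\le 0$, so $w^{T}z\ge 0$; but $w^{T}z=w^{T}\psi(h)-w^{T}\one=-w^{T}\one\le 0$, forcing $w^{T}z=0$ and $w^{T}\one=0$. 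When $\one\notin W$ one can instead arrange $w^{T}\one>0$ from the start, and the contradiction is immediate — this settles the case $\one\notin W$, i.e. the case where the affine set $W-\one$ misses the origin.

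The hard part is the remaining case $\one\in W$, equivalently the situation where the relevant coordinates satisfy $\psi_i(h)=1$ and the error pattern is no longer determined by $\psi(h)$: whether $x_i+\bar{\mcB}\subseteq\{h=c_i\}$ then depends on how the trace of $h$ on its separating hyperplane splits the contact set $F_i=(x_i+\bar{\mcB})\cap\{\langle a_h,\cdot\rangle=b_h\}$. (Indeed, when $\one\in W$ the set $W-\one$ is a linear subspace, which meets every closed orthant at the origin, so the weak‑inequality version of the previous step is genuinely insufficient.) I would resolve this by observing that the trace of $h$ is again a partition of a lower‑dimensional affine subspace into two convex parts, so realizing a prescribed pattern of inclusions/exclusions $F_i\subseteq\{h=c_i\}$ is itself constrained by a convex‑separation obstruction, and combining that with the dimension count to rule out a suitable $\eta^{*}$; a limiting argument (a pattern realizable only with some $\psi_i(h)=1$ is a limit of patterns realizable with strict inequalities) is a possible alternative. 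I expect this tie/boundary analysis — which does not appear in the explicit‑parametrization proof of the previous section — to be the main obstacle, with everything else a routine adaptation.
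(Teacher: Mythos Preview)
Your proposal reproduces the linearization/sign--pattern argument from the main text and correctly isolates where it breaks under the appendix's broader definition of a halfspace: when $\one\in W$, the implications $\psi_i(h)>1\Rightarrow\eta_i^*=0\Rightarrow\psi_i(h)\ge1$ only force $\psi_i(h)=1$ on the support of $w$, and at those indices whether $\eta_i=0$ or $1$ is decided by how $h$ partitions its own separating hyperplane, not by $\psi$. You flag this as ``the hard part'' but do not resolve it, and neither suggested patch is convincing as stated. The limiting idea fails because error patterns are discrete: nudging $h$ to make $\psi_i(h)\neq1$ can flip $\eta_i$ at exactly the coordinates you are trying to control, so $\eta^*$ need not be a limit of achievable strict-inequality patterns. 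The recursion-on-the-hyperplane idea would require manufacturing, out of the contact faces $F_i$, a new instance of the same lemma one dimension down, but the $F_i$ are sets rather than points, their number is the support of $w$ rather than $n$, and the relevant adversary on the hyperplane is not obviously another $\bar{\mcB}$-ball; there is no clear dimension bookkeeping that closes the induction. So the proposal has a genuine gap at precisely the case you identify. (There is also a minor loose end: constant classifiers $h\equiv\pm1$ admit no canonical $(a_h,b_h)$, so $\psi(h)$ is not well defined for them; they must be excluded by hand.)

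The paper's proof avoids linearization altogether and works directly with convexity, via a Radon-type construction. From $n\ge\dim V+2$ one extracts an affine dependency $\sum_i a_i x_i=\zero$, $\sum_i a_i=0$, $\sum_i|a_i|=2$; set $J=\{i:a_i>0\}$, $K=\{i:a_i<0\}$, and take $\eta_i=\one(\sgn a_i\neq c_i)$ (or its complement, according to a weight inequality). If some $h$ realized this $\eta$, one chooses perturbations $\delta_i\in\mcB$---existentially on the ``error'' indices $J^-\cup K^+$, and as a common vector in $\mcB$ on the ``correct'' indices $J^+\cup K^-$---so that the convex combinations $\sum_{i\in J}a_i(x_i+\delta_i)$ and $\sum_{i\in K}(-a_i)(x_i+\delta_i)$ coincide at a single point $z'$. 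By construction every $x_i+\delta_i$ with $i\in J$ lies in $\{h=1\}$ and every $x_i+\delta_i$ with $i\in K$ lies in $\{h=-1\}$; convexity of both sets then forces $h(z')=1$ and $h(z')=-1$ simultaneously. No normalization, no dual norm, and no strict inequalities are needed, so the boundary behaviour that stalls your approach never enters.
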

\begin{proof}
  Let $[n] = \{0,\ldots,n-1\}$.
  The vector space $U = \{\bfa \in \R^n : \sum_{i \in [n]}  a_i = 0\}$ has dimension $n-1$.
  The function $\phi : U \to V$, $\phi(\bfa) = \sum_{i \in [n]} a_i x_i$ is linear and the dimension of $V$ is $d - \dim V_{\mcB}$, so the nullspace of $\phi$ has dimension at least $n-1-d \geq 1$.
  Thus there are coefficients $\bfa$ in this nullspace such that $\sum_{i \in [n]} |a_i| = 2$.
  \bcomment{
    \[
      \sum_{i \in [n]} a_i x_i = \zero \text{ and }   \sum_{i \in [n]}  a_i = 0 \text{ and } \sum_{i \in [n]} |a_i| = 2
    \]
    because there is a 2-dimensional subspace satisfying the first condition and an $n-1$ dimensional subspace satisfying the second condition, so their intersection has dimension at least 1.
    The third condition ensures $\bfa \neq \zero$ and fixes the scaling.
  }

  Let $J = \{i \in [n] : a_i > 0\}$ and $K = \{i \in [n] : a_i < 0\}$, so $\sum_{i \in J} a_i = \sum_{i \in K} -a_i = 1$.
  We have $\sum_{i \in [n]} a_ix_i = 0$, so $\sum_{i \in J} a_i x_i = \sum_{i \in K} -a_i x_i$.
  Call this point $z$.
  Let
  \begin{alignat*}{2}
    J^+ &= \{i \in J: c_i = 1\}&    \quad\alpha_J^+ &= \textstyle{\sum}_{i \in J^+} |a_i|\\
    J^- &= \{i \in J: c_i = -1\}&    \quad\alpha_J^- &= \textstyle{\sum}_{i \in J^-} |a_i|\\
    K^+ &= \{i \in K: c_i = 1\}&    \quad\alpha_K^+ &= \textstyle{\sum}_{i \in K^+} |a_i|\\
    K^- &= \{i \in K: c_i = -1\}&    \quad\alpha_K^- &= \textstyle{\sum}_{i \in K^-} |a_i|.
  \end{alignat*}
  
  Suppose that $\alpha_J^+ + \alpha_K^- \geq \alpha_J^- + \alpha_K^+$.
  We will show that the error pattern $\eta$ such that $\eta_i = \one(\sgn a_i \neq c_i)$ is not achieved.
  If this inequality does not hold, we can work with $-a$ instead of $a$, or equivalently switch the roles of $J$ and $K$, and thus $\eta$ such that $\eta_i = \one(-\sgn a_i \neq c_i)$ is not achieved.

  
  Suppose that there is some $h \in \mcH$ such that $\lambda(\kappa_R(h)(\bfx),\bfc) = \eta$.
  Then each example with index in $J^+$ or $K^-$ must be classified correctly and each example with index in $J^-$ or $K^+$ must be classified incorrectly.
  For each $i \in J^-$, there is some $\delta_i \in \mcB$ such that $h(x_i + \delta_i) = 1$.
  Similarly, for each $i \in K^+$, there is some $\delta_i \in \mcB$ such that $h(x_i+\delta_i) = -1$.
  For $i \in J^+$, $h(x_i+\delta_i) = 1$ because all of $x_i + \mcB$ must be classified as $1$.
  Similarly, for $i \in K^-$, $h(x_i+\delta_i) = -1$ because all of $x_i + \mcB$ must be classified as $-1$.

  All of this is summarized as follows:

  \begin{centering} 
  \begin{tabular}{lrrrl}
    Set   & $\sgn a_i$ & $c_i$ & $\eta_i$ & Behavior of $h$ near $x_i$\\
    \hline
    $i \in J^+$ & $+1$       & $+1$  & $0$      & $\forall \delta_i \in \mcB \quad h(x_i +\delta_i) = +1$\\
    $i \in J^-$ & $+1$       & $-1$  & $1$      & $\exists \delta_i \in \mcB \quad h(x_i +\delta_i) = +1$\\
    $i \in K^+$ & $-1$       & $+1$  & $1$      & $\exists \delta_i \in \mcB \quad h(x_i +\delta_i) = -1$\\
    $i \in K^-$ & $-1$       & $-1$  & $0$      & $\forall \delta_i \in \mcB \quad h(x_i +\delta_i) = -1$.
  \end{tabular}
\end{centering}

Let
  \[
    \overline{\delta} = \frac{1}{\alpha_J^-+\alpha_K^+}\sum_{i \in J^- \cup K^+} a_i \delta_i
    =\frac{1}{\alpha_J^-+\alpha_K^+}\left(\sum_{i \in J^-} a_i \delta_i + \sum_{i \in K^+} (-a_i) (-\delta_i)\right).
  \]
  Then $\overline{\delta}$ is is a convex combination of points in $\mcB$, so $\overline{\delta} \in \mcB$.
  Because $\alpha_J^+ + \alpha_K^- \geq \alpha_J^- + \alpha_K^+$, the rescaled vector $\frac{\alpha_J^-+\alpha_K^+}{\alpha_J^++\alpha_K^-} \overline{\delta}$ is in $\mcB$ as well.
  For $i \in J^+ \cup K^-$, let $\delta_i = -\frac{\alpha_J^-+\alpha_K^+}{\alpha_J^++\alpha_K^-}\overline{\delta}$.

  Because $h(x_i+\delta_i) = 1$ for all $i \in J$, if $z'$ is a convex combination of those points, then $h(z') = 1$.  
  Let $z'$ be the following convex combination of perturbed examples indexed by $J$:
  \begin{align*}
    \sum_{i \in J} a_i(x_i+\delta_i)
    =& z + \sum_{i \in J^-} a_i \delta_i - \frac{\alpha_J^-+\alpha_K^+}{\alpha_J^++\alpha_K^-} \overline{\delta} \sum_{i \in J^+} a_i\\
    =& z + \sum_{i \in J^-} a_i \delta_i - \frac{\alpha_J^+}{\alpha_J^++\alpha_K^-}\sum_{i \in J^- \cup K^+} a_i \delta_i\\
    =& z + \frac{\alpha_K^-}{\alpha_J^++\alpha_K^-}\sum_{i \in J^-} a_i \delta_i - \frac{\alpha_J^+}{\alpha_J^++\alpha_K^-}\sum_{i \in K^+} a_i \delta_i
  \end{align*}

  We can obtain the same point $z'$ as a convex combination of perturbed examples indexed by $K$:
  \begin{align*}
    \sum_{i \in K} -a_i(x_i+\delta_i)
    =& z - \sum_{i \in K^+} a_i \delta_i + \frac{\alpha_J^-+\alpha_K^+}{\alpha_J^++\alpha_K^-} \overline{\delta} \sum_{i \in K^-} a_i\\
    =& z - \sum_{i \in K^+} a_i \delta_i + \frac{\alpha_K^-}{\alpha_J^++\alpha_K^-}\sum_{i \in J^- \cup K^+} a_i \delta_i\\
    =& z - \frac{\alpha_J^+}{\alpha_J^++\alpha_K^-}\sum_{i \in K^+} a_i \delta_i + \frac{\alpha_K^-}{\alpha_J^++\alpha_K^-}\sum_{i \in J^-} a_i \delta_i.
  \end{align*}
  Because $h(x_i+\delta_i) = -1$ for all $i \in K$, $h(z') = -1$.
  Thus there is no classifier $h$ that achieves $\eta$.
\end{proof}


\end{document}